\newtheorem{theorem}{Theorem}
\newtheorem{proof}{Proof}
\title{Towards Coarse and Fine-grained Multi-Graph Multi-Label Learning }
\author{
    Yejiang Wang\textsuperscript{\rm 1}, Yuhai Zhao\textsuperscript{\rm 1},\\
    Zhengkui Wang\textsuperscript{\rm 2},  \\
    Chengqi Zhang\textsuperscript{\rm 3}
    \\
}
\begin{document}


\maketitle

\begin{abstract}
    Multi-graph multi-label learning (\textsc{Mgml}) is a supervised learning framework, which aims to learn a multi-label classifier from a set of labeled bags each containing a number of graphs. Prior techniques on the \textsc{Mgml} are developed based on transfering graphs into instances and focus on learning the unseen labels only at the bag level. In this paper, we propose a \textit{coarse} and \textit{fine-grained} Multi-graph Multi-label (cfMGML) learning framework which directly builds the learning model over the graphs and empowers the label prediction at both the \textit{coarse} (aka. bag) level and \textit{fine-grained} (aka. graph in each bag) level. In particular, given a set of labeled multi-graph bags, we design the scoring functions at both graph and bag levels to model the relevance between the label and data using specific graph kernels. Meanwhile, we propose a thresholding rank-loss objective function to rank the labels for the graphs and bags and minimize the hamming-loss simultaneously at one-step, which aims to addresses the error accumulation issue in traditional rank-loss algorithms. To tackle the non-convex optimization problem, we further develop an effective sub-gradient descent algorithm to handle high-dimensional space computation required in cfMGML. Experiments over various real-world datasets demonstrate cfMGML achieves superior performance than the state-of-arts algorithms.
\end{abstract}

\section{Introduction}
The importance of structure information in machine learning has received increasing attention, the application of which ranges from bioinformatics and chemistry to social networks. Traditional Multi-instance Learning (\textsc{Mil}) models each object of interest (e.g. image or text) into a bag-of-instances representation \cite{dietterich1997misl}. For example, in the image classification, each image (as shown in Fig.\ref{fig1a}) is represented as a bag and samples (aka. regions) inside the image can be represented as instances (as shown in Fig.\ref{fig1b}). Although \textsc{Mil} has been used in many different applications, it faces the challenges of capturing the dependency of the samples in one object. In reality, many real-world objects are inherently complicated and the samples in the object may have dependency with each other, which has unfortunately been discarded in the bag-of-instances representation in \textsc{Mil}. The dependency or the relationship plays an important role in describing the object. As shown in Fig.\ref{fig1b}, the vectors can only show the pixels of images without the adjacency relations between pixels. 

Much recent research effort has been devoted to adopt a better structured data (graph) to represent the object in the learning. Existing works can be mainly categorized into below two main classes: 1.) One class of learning methods focuses on representing one object into a single graph (as shown in Fig.\ref{fig1c}) \cite{zhou2009multi-instance-non-iid,2010-graph-mi-6}. A single graph representaion can capture the global relationships of the object, where each sample is represented as one node with a vector value and the sample relationship is captured as the edges. However, the vectors representation for each node can only show the visual features and lacks of the local structure spatial relations in the sample. 2.) Another recent class of learning methods adopts a bag-of-graphs representation of the given object, which uses a graph to represent each sample in the object in the learning \cite{wu2014boosting,pang2018semi}. Fig.\ref{fig1d} shows such a bag-of-graphs representation of Fig.\ref{fig1a}, where the image is divided into multiple samples and each sample is represented by a graph that captures the adjacency relations between pixels. This is also known as Multi-graph Learning (\textsc{Mgl}), which has drawn increasing interest in machine learning community. For example, Multi-graph Single-label (\textsc{Mgsl}) learning is proposed to design learning models to train and predict the data by assuming there is only a single label on the object. Later on, Multi-graph Multi-label (\textsc{Mgml}) learning is proposed to release the constraint of single label issue and to enable the calssification model to describe each bag of graphs with multiple labels. The second class of \textsc{Mgl} learning is proven to be more efficient than the \textsc{Mil} and the first class methods \cite{zhu2018multi}.       

However, the disadvantage of prior \textsc{Mgl} approaches is two-fold. First, although the input data of existing \textsc{Mgl} works is bag-of-graphs, they still need to explicitly transform each graph into binary feature instances and then design the models based on these instances. This graph-to-vector transformation, unfortunately, may fail to capture all the relevant graph structure information that is of value for the learning. Second, all the existing \textsc{Mgl} techniques are only capable of predicting the class labels for the \textit{coarse-grained} level (to label the graph bags), but not for the \textit{fine-grained} level (to label the graphs inside each bag). Automated labeling in the \textit{fine-grained} level has been proven to be of great importance in many applications, such as image or text annotation \cite{briggs2012rank}. Annotating large dataset require significant manual inspection effort by domain experts, which is time-consuming and expensive.


In this paper, we propose a new \textit{coarse} and \textit{fine-grained} multi-graph multi-label (cfMGML) learning framework, which is the first attempt to develop \textsc{Mgml} solution directly over graphs inside the bag and predicting labels at both graph and bag levels simultaneously. Different to prior \textsc{Mgl} methods, we first map each graph within a bag into a high-dimensional feature space with a specific graph kernel that retains the relevant information of graph for classification instead of transfering graphs to binary instances. Then, we define kernel-based scoring functions for each class to label each graph while we adopt another bag-level scoring functions which iteratively select the most valuable graphs to represent and label the bag. Moreover, we present a new thresholding rank loss objective function, which also inherently alleviates a cumulative error issue. Furthermore, to solve the non-convex optimization problem, we develop an effective subgradient descent algorithm with a formal proof of the bound. We have further introduced several real-world labeled datasets for cfMGML learning, which include both images and text. These labeled datasets will be publicly available to benefit the community. Comparative experiments over various datasets and baseline algorithms clearly validate the effectiveness of cfMGML.

\begin{figure}[tb]
    \centering
    \vspace{-0.0em}
    \subfigure[\emph{Origin image}]{\label{fig1a}\includegraphics[scale=0.47]{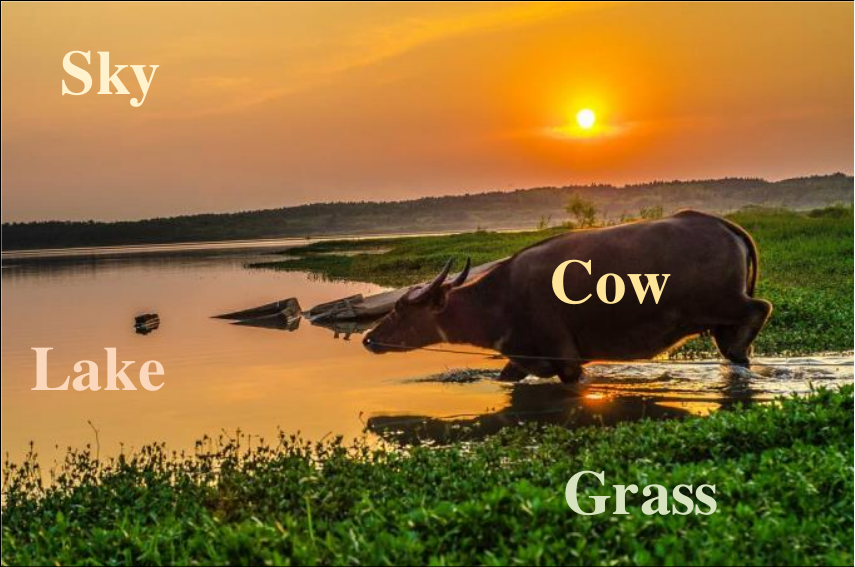}}
    \subfigure[\emph{A bag-of-instances}]{\label{fig1b}\includegraphics[scale=0.47]{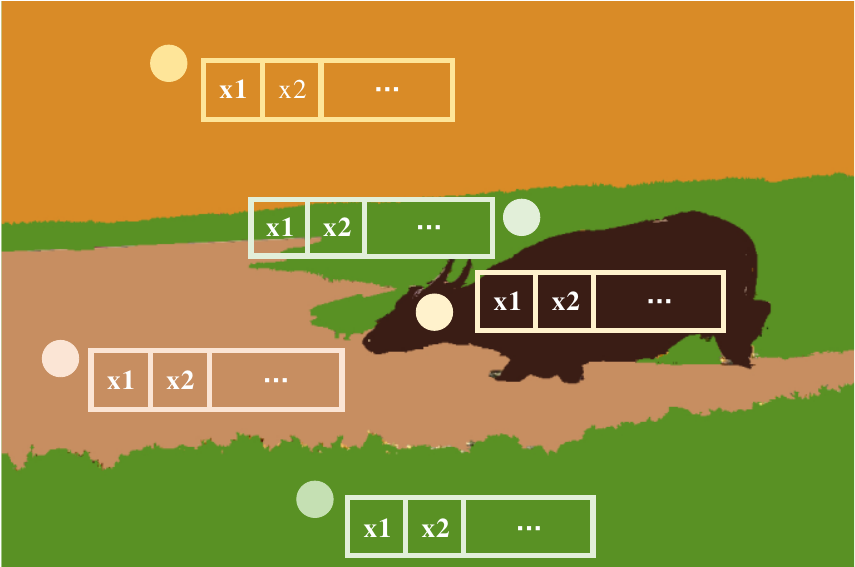}}
    \subfigure[\emph{A single graph}]{\label{fig1c}\includegraphics[scale=0.47]{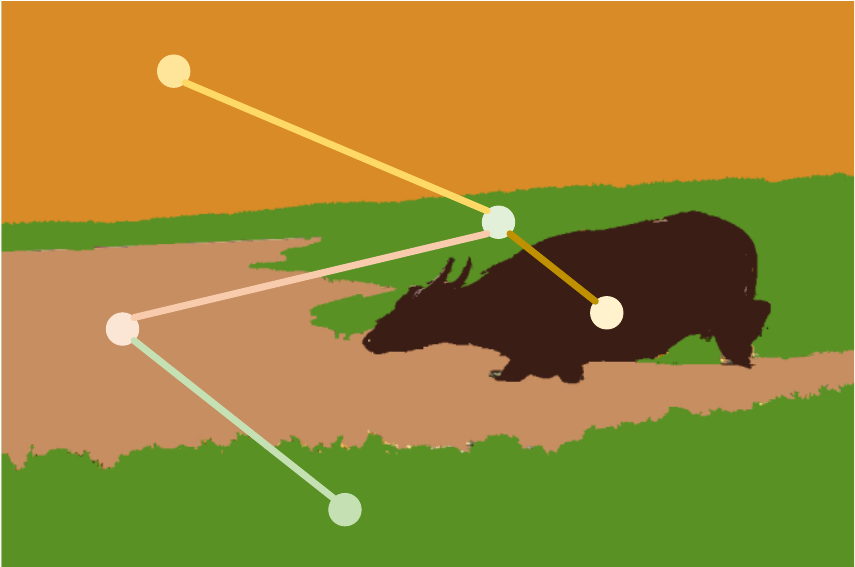}}
    \subfigure[\emph{A bag-of-graphs}]{\label{fig1d}\includegraphics[scale=0.47]{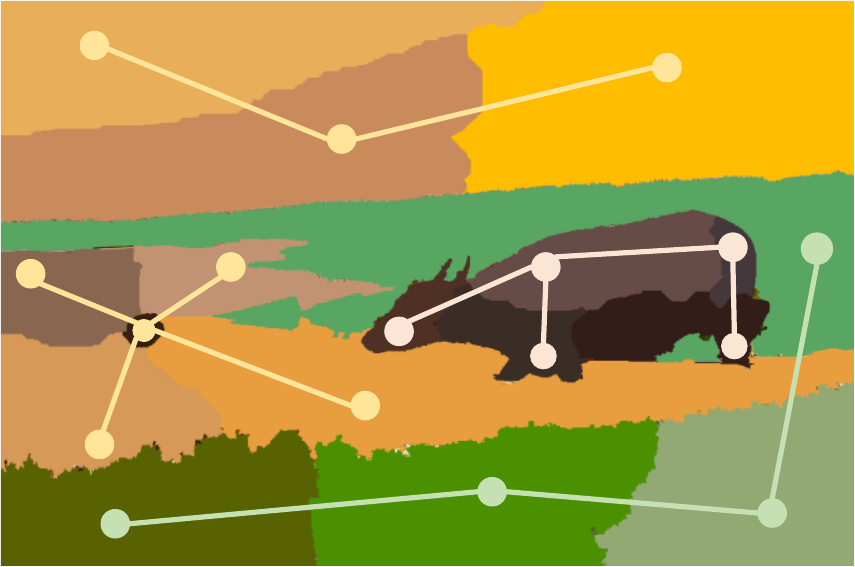}}
    \vspace{-1em}
     \caption{An image and its bag with several representaions.}
    \vspace{-1em}
     \label{fig1}
\end{figure}



The remainder of the paper is organized as follows. Section 2 introduces our proposed cfMGML learning framework with details. In Section 3, we present the experimental evaluation details. Section 4 provides the a review of related works, and we conclude the paper in Section 5. 

\section{The Proposed Solution}

 In this section, we first introduce the problem definition and an overview of cfMGML, followed by the detailed design.

\subsection{Problem Definition}
In the cfMGML learning, we are given a set of $n$ multi-graph training bags $\mathcal{Z}$$=$$ \{\mathcal B_1,\mathcal B_2,...,\mathcal B_n\}$, where $\mathcal B_i$$=$$\{g_{i1},g_{i2},...,g_{in_i}\}$ is a multi-graph bag used to represent the $i^{th}$ object (e.g. an image or text), where $g_{ij}$ denotes the graph in $\mathcal B_i$ and $n_i$ denotes the number of graphs in $\mathcal B_i$. Each bag $\mathcal B_i$ is labeled with a label set $Y_i^+ \subseteq \mathcal{Y}$ and let $Y_i^-$ denotes the complementary of $Y_i^+$, where $\mathcal{Y} $$=$$ \{1,2,...,C\}$ and $C$ is the total number of classes. Moreover, $g_{ij}$ is represented as $g=(\mathcal{V,\it{E},\mathcal S, l)}$, where $\mathcal{V}$ denotes a set of vertices; $\it{E}\subseteq \mathcal{V \times V}$ is a set of edges; $\mathcal S$ is a label set of vertices inside the graph; and $l:\mathcal V \rightarrow \mathcal S$ is a mapping that assigns real-valued vector (node-attributes graph) or real number (node-label graph) to the vertices. Note that $\mathcal{Y}$ and $\mathcal S$ are different. The elements in $\mathcal{Y}$ represent class labels of graphs or bags (e.g. cow or sky), while the elements in $\mathcal S$ represent the property of vertices (e.g. the ordinal number of the vertices in graph $g_{ij}$). In this paper, the proposed algorithm can be used in the case of each graph with multiple labels.

The traditional \textsc{Mgml} learning only needs to predict the label set for each of $m$ testing bags $\{\hat{\mathcal B}_1,\hat{\mathcal B}_2,...,\hat{\mathcal B}_m\}$ of graphs. Differently, the aim of cfMGML is to train one supervised machine learning model such that, for any given unlabeled testing example represented as a multi-graph bag $\hat{\mathcal B}_i$$=$$\{g_{i1},g_{i2},...,g_{im_i}\}$, where $m_i$ denotes the number of graph in $\hat{B}_i$, we can predict the labels of $\hat{B}_i$ and the label of each graph $g_{ij}$ inside $\hat{\mathcal B}_i$.



\subsection{Overview of cfMGML}


%
Graph differs from instances in that it contains not only the global information but also local spatial information. Therefore, given a set of labeled multi-graph bags, we first map each graph of a bag into a high-dimensional feature space with a specific kernel to retain the relevant graph information. Then, we define graph-level scoring functions to score each graph for each label, which can model the relevance between the label and the graph. Moreover, in order to reduce the computational complexity, especially when there are a large number of training examples, we describe a criterion to select the most valuable graph from bags as the representative graphs to define bag-level scoring functions. Then we present a thresholding rank-loss objective to take label relations into account, by assuming that the relevant labels is expected to be ranked before irrelevant ones for each bag, and tackle a cumulative error issue by introducing a virtual zero label for thresholding. At last, in consideration of the extremely high computational complexity of the Frank-Wolfe method which is used to solve quadratic programming problem, we design a subgradient descent algorithm to optimize the loss function and its convergence is proved.

\subsection{Graph Labeling}
Existing \textsc{Mgml} methods try to use instance vector to represent each graph in the training. However, this will likely lose the structural information in the learning. To tackle this issue and label a graph, we define kernel-based graph-level scoring functions for each class label $c$ as follows
\begin{equation}
    f_c(g)=\langle w_c,\phi(g) \rangle
\end{equation}
where $w_c$ is the weight and $g$ is the training graph from training bags. $\phi(\cdot)$ maps the graph space into a Hilbert space with inner product $\langle \cdot,\cdot \rangle$. The scoring functions enable cfMGML to model the relevance between the label and the graph, where a higher value of the scoring function indicates a higher relevance. Note that the proposed algorithm can be used in the case of each graph with multiple labels by setting a threshold for the scoring functions. In addition, cfMGML is general enough to adopt any kernel functions to different classification problems. For example, we use Graph-Hopper (GH) kernel \cite{feragen2013GraphHopper} for node-attributes graphs which can be used to represent image data and Weisfeiler-Lehman (WL) kernel \cite{shervashidze2011weisfeiler} is applied for the node-label graphs which is used to represent text data.


\subsection{Bag Labeling}

Consider that, in the training datasets, there are only labels for the bag instead of the graph of the bag. We need to construct a bag-level scoring function for bag labeling.  
Intuitively, the labels of bag are the union of graph labels. In other words, there exists at least one graph in bag $\mathcal B$ from class $c$ if $c$ is a label of $\mathcal B$. With such an assumption, we define the bag-level scoring function at each class as
\begin{equation}
    F_c(\mathcal B)=\max_{g_j \in \mathcal{B}} f_c(g_j)=\langle w_c,\phi(\overline{g}_c) \rangle
\end{equation}
where $\overline{g}_c$ is the representative graph which achieves the maximum value for each bag (i.e. $\overline{g}_c$$=$$\arg\max_{g_{j}\in \mathcal B}\langle w_c,\phi(g_{j})\rangle$) on class $c$. Specifically, if a graph in a bag is associated with a label, the label set of the bag must contain that label.

\subsection{Thresholding Rank-Loss Objective Function}
To model the dependencies between predicted graph labels and the ranking of scores for each label, we minimize the pairwise approximate rank-loss which imposes a penalty on a classifier for the incorrect ranking. Traditional rank-loss methods try to find the ideal threshold value based on the learned parameter value after the training step. However, it faces a cumulative error issue since the training step has the ranking loss error bias and the threshold value also has the error bias \cite{2012-Xu12a}. In this section we tackle this issue by directly introducing a virtual zero label for thresholding in the training process. It can be formulated as
\begin{equation}
    \resizebox{.91\linewidth}{!}{$
    \displaystyle
    \begin{aligned} 
    \mathcal{L}(\mathcal B_i) &= \frac{1}{|Y_i^+|^2}\sum\limits_{p \in Y_i^+}[[F_p(\mathcal B_i) \leq 0]] + \frac{1}{|Y_i^-|^2}\sum\limits_{q \in Y_i^-}[[F_q(\mathcal B_i) \geq 0]]\\
    &+ \frac{1}{|Y_i^+||Y_i^-|}\sum\limits_{p \in Y_i^+}\sum\limits_{q \in Y_i^-}[[F_p(\mathcal B_i) \leq F_q(\mathcal B_i)]]
    \end{aligned}
$}
\label{cfmgml:objective}
\end{equation}
where $|\cdot|$ represents cardinality, and $[[\cdot]]$ is the indicator function. The first two terms in the formula correspond to the thresholding step, which is equivalent to the minimization of the hamming loss. The last term aims to encourage the score for the positive label in $Y_i^+$.

Due to discontinuousness and non-convexity, we instead explore hinge loss which is a larger margin loss and one of optimal convex surrogate losses \cite{rosasco2004hingeloss}. Then, the learning is defined as:
\begin{equation}
    \resizebox{.91\linewidth}{!}{$
    \displaystyle
    \begin{aligned} 
        \mathop{{ min}}\limits_{W} \quad & \frac{\lambda}{2}\sum_{c=1}^{C} \left\|w_c\right\|^2 + \frac{1}{n} \sum_{i=1}^{n} \left\{ \frac{1}{|Y_i^+|^2} \sum_{p \in Y_i^+} \xi_{p,0}^i + \frac{1}{|Y_i^-|^2} \sum_{q \in Y_i^-} \xi_{0,q}^i \right.\\
        & \left. +\frac{1}{|Y_i^+||Y_i^-|} \sum_{p \in Y_i^+} \sum_{q \in Y_i^-} \xi_{p,q}^i \right\} \\
        \textrm{s.t. } \quad & F_p(\mathcal B_i) - 0 \geq 1-\xi_{p,0}^i, \quad p \in Y_i^+ \\
        \quad & 0 - F_q(\mathcal B_i) \geq 1-\xi_{0,q}^i, \quad q \in Y_i^- \\
        \quad & F_p(\mathcal B_i)-F_q(\mathcal B_i) \geq 2-\xi_{p,q}^i, \quad (p,q) \in Y_i^+ \times  Y_i^- \\
        & \xi_{p,q}^i,\xi_{p,0}^i,\xi_{0,q}^i \ge 0, \quad i=1,...,n
    \end{aligned}
$}
\label{cfmgml:constrained}
\end{equation}
Other variant approaches of rank-loss method \cite{2008-Calibrated-Rank-SVM,2014-rank-svmz} introduce another zero label parameter (i.e., $w_0$), and thus increase the complexity of the model (i.e., the hypothesis set). Differently, we introduce a zero label while avoiding adding another parameters $w_0$ and reduce the complexity. By transforming Eq.(\ref{cfmgml:constrained}) into unconstrained optimization, the surrogate loss can be expressed as:
\begin{equation}
    \resizebox{.91\linewidth}{!}{$
    \displaystyle
    \begin{split}
        \mathop{{ min}}\limits_{W} 
        \quad & \frac{\lambda}{2}\sum_{c=1}^{C} \left\|w_c\right\|^2 + \frac{1}{n}\sum_{i=1}^{n} \left\{ \frac{1}{|Y_i^+|^2} \sum_{p \in Y_i^+} |1-\langle w_p,\phi(\overline{g}_{i,p})\rangle|_+ \right.\\
        & + \frac{1}{|Y_i^-|^2} \sum_{q \in Y_i^-} |1+\langle w_q,\phi(\overline{g}_{i,q})\rangle|_+ + \frac{1}{|Y_i^+||Y_i^-|} \\
        & \left. \times \sum_{p \in Y_i^+} \sum_{q \in Y_i^-} |2+\langle w_q,\phi(\overline{g}_{i,q})\rangle-\langle w_p,\phi(\overline{g}_{i,p})\rangle|_+ \right\}
    \end{split}
    \label{cfmgml:unconstrained}
$}
\end{equation}
where $|a|_+=a$ if $a>0$, otherwise $|a|_+=0$. The first term of objective controls model complexity by penalizing the norm of weight matrix $W$. The terms in brace measure the difference of label hyperplanes $f_{c}(g)$ on graph $g$ between positive label and zero one, zero label and negative one, positive label and negative one, respectively.

\subsection{The Algorithm}

Unfortunately, the regularized surrogate loss Eq.(\ref{cfmgml:unconstrained}) is non-convex because graph $\overline{g}$ in last term is not an independent variable. To solve this problem,  we first find the representative graph $\overline{g}$ and all the representative graphs are treated as constant. Then we design a new subgradient descent algorithm to optimize loss function. The subgradient w.r.t. $w_c$ of the objective at iteration $t$ is computed as:
\begin{equation}
    \resizebox{.91\linewidth}{!}{$
    \displaystyle
    \begin{aligned}
    \nabla_t^c=&\lambda w_t^c + \frac{1}{n} \sum_{i=1}^n \left\{ \frac{1}{|Y_i^+|^2} \sum_{p \in Y_i^+} \alpha_{p}^c \phi(\overline{g}_{i,c}) [[1-\langle w_t^p, \phi(\overline{g}_{i,p}) \rangle \geq 0]] \right.\\
    & + \frac{1}{|Y_i^-|^2} \sum_{q \in Y_i^-} \alpha_{q}^c \phi(\overline{g}_{i,c}) [[1+\langle w_t^q, \phi(\overline{g}_{i,q}) \rangle \geq 0]] + \frac{1}{|Y_i^+||Y_i^-|}\\
    & \left. \times \sum_{p \in Y_i^+} \sum_{q \in Y_i^-} \beta_{p,q}^c \phi(\overline{g}_{i,c}) [[2+\langle w_t^q, \phi(\overline{g}_{i,q}) \rangle \geq \langle w_t^p, \phi(\overline{g}_{i,p}) \rangle]] \right\}
    \end{aligned}
$}
\end{equation}
where $\alpha_{x}^c$$=1$ $if$ $x$$==$$c;$ $0$ $otherwise$, and $\beta_{p,q}^c$$=1$ $if$ $c$$==$$q;$ $-1$ $else$ $if$ $c$$==$$q;$ $0$ $otherwise$.
We then update $w_{t+1}^c$$=$$w_t^c-\eta_t \nabla_t^c$, where $\eta_t$$=$$1/(\lambda t)$ is the step size. Obviously, by eliminating recursion the update $w_{t+1}^c$ can be rewritten as:
\begin{align}
    \resizebox{.91\linewidth}{!}{$
    \displaystyle
    \begin{aligned}
    w_{t+1}^c = & \frac{1}{\lambda t} \sum_{u=1}^t \frac{1}{n} \sum_{i=1}^n \left\{ \frac{1}{|Y_i^+|^2} \sum_{p \in Y_i^+} \alpha_{p}^c \phi(\overline{g}_{i,c}) [[1-\langle w_t^p, \phi(\overline{g}_{i,p}) \rangle \geq 0]] \right.\\
    & + \frac{1}{|Y_i^-|^2} \sum_{q \in Y_i^-} \alpha_{q}^c \phi(\overline{g}_{i,c}) [[1+\langle w_t^q, \phi(\overline{g}_{i,q}) \rangle \geq 0]] + \frac{1}{|Y_i^+||Y_i^-|}\\
    & \left. \times \sum_{p \in Y_i^+} \sum_{q \in Y_i^-} \beta_{p,q}^c \phi(\overline{g}_{i,c}) [[2+\langle w_t^q, \phi(\overline{g}_{i,q}) \rangle \geq \langle w_t^p, \phi(\overline{g}_{i,p}) \rangle]] \right\}
    \end{aligned}
    \label{cfmgml:omega}
$}
\end{align}
By interchanging the order of summation, we define
\begin{equation}
    \begin{cases}
        \gamma_{t+1}^{p,i} = \sum_{u=1}^t [[1-\langle w_t^p, \phi(\overline{g}_{i,p}) \rangle \geq 0]]\\
        \delta_{t+1}^{q,i} = \sum_{u=1}^t [[1+\langle w_t^q, \phi(\overline{g}_{i,q}) \rangle \geq 0]]\\
        \zeta_{t+1}^{p,q,i} = \sum_{u=1}^t [[2 + \langle w_u^q, \phi(\overline{g}_{i,q}) \rangle \geq \langle w_u^p, \phi(\overline{g}_{i,p}) \rangle]]
    \end{cases}
    \label{cfmgml:sum}
\end{equation}

\noindent where $p,q,i$ are fixed in each iteration. The summation formula (\ref{cfmgml:sum}) can be transformed as an iterative formula:
\begin{equation}
    \resizebox{.91\linewidth}{!}{$
    \displaystyle
    \begin{cases}
        \gamma_{t+1}^{p,i} = \gamma_{t}^{p,i} + [[1-\langle w_t^p, \phi(\overline{g}_{i,p}) \rangle \geq 0]] \\
        \delta_{t+1}^{q,i} = \delta_{t}^{q,i} + [[1+\langle w_t^q, \phi(\overline{g}_{i,q}) \rangle \geq 0]] \\
        \zeta_{t+1}^{p,q,i} = \zeta_{t}^{p,q,i} + [[2 + \langle w_t^q, \phi(\overline{g}_{i,q}) \rangle \geq \langle w_t^p, \phi(\overline{g}_{i,p}) \rangle]]
    \end{cases}
    \label{cfmgml:condition}
$}
\end{equation}
For convenience, we  suppose $\kappa_{t+1}^{c,i} $$=$$ |Y_i^-|^2 \sum_{p \in Y_i^+} \alpha_{p}^c \gamma_{t+1}^{p,i}$, $\nu_{t+1}^{c,i} = |Y_i^+|^2 \sum_{q \in Y_i^-} \alpha_{q}^c \delta_{t+1}^{q,i}$, $\mu_{t+1}^{c,i} = |Y_i^+||Y_i^-| \sum_{p \in Y_i^+}  $ \\
$\sum_{q \in Y_i^-} \beta_{p,q}^c \zeta_{t+1}^{p,q,i}$ and $z_i$$=$$\lambda t n |Y_i^+|^2|Y_i^-|^2$,
Eq.(\ref{cfmgml:omega}) can be rewritten as
\begin{equation}
    w_{t+1}^c = \sum_{i=1}^n \frac{1}{z_i} (\kappa_{t+1}^{c,i} + \nu_{t+1}^{c,i} + \mu_{t+1}^{c,i}) \times \phi(\overline{g}_{i,c})
\end{equation}
At next iteration $t+2$, with $w_{t+1}^c$, compute $\langle w_{t+1}^c, \phi(\overline{g}_{i,c}) \rangle$ of the condition in (\ref{cfmgml:condition}) as:
\begin{equation}
    \langle w_{t+1}^c, \phi(\overline{g}_{i,c}) \rangle = \sum_{h=1}^n \frac{1}{z_h} (\kappa_{t+1}^{c,h} + \nu_{t+1}^{c,h} + \mu_{t+1}^{c,h}) K(\overline{g}_{h,c}, \overline{g}_{i,c})
\end{equation}
where $K(\overline{g}_{h,c}, \overline{g}_{i,c})$$=$$\langle \phi(\overline{g}_{h,c}), \phi(\overline{g}_{i,c}) \rangle$. Note that the proposed algorithm does not need to directly access to the high-dimensional feature space, $\phi(\overline{g}_{i,c})$ and the weight $w_{t+1}^c$, which only provides kernel evaluations.

The pseudo code of cfMGML is presented in Algorithm \ref{alg:algorithm}. In the training stage, cfMGML repeats for $R$ rounds. There are two phases at each round $r$. The first phase (line 1,15) updates the representative graphs for the training bags using the last weights from subgradient descent. Note that we randomly select a graph from a bag as representative graph at the first round. The second phase (line 3-13) runs the subgradient descent algorithm a total of $T$ iterations using the representative graphs. At the first iteration ($t$$=$$1$), we initialize the variables $\boldsymbol \kappa, \boldsymbol \nu, \boldsymbol \mu$, where $\boldsymbol \kappa $$=$$\{\kappa_{c,i}\}$, $\boldsymbol \nu $$=$$\{\nu_{c,i}\}$ and $\boldsymbol \mu $$=$$\{\mu_{c,i}\}$, to zero and update it using the final weights from previous round. Otherwise we use the updated $\boldsymbol s$, where $\boldsymbol s $$=$$ \boldsymbol \kappa$$+$$\boldsymbol \nu$$+$$\boldsymbol \mu$, from the earlier iteration to perform the subgradient descent step. Then the variable $\boldsymbol s$ is projected to the sphere space where the optimal solution exist. Note that the norm of $w_{t+1}^c$ in line 13 is given by $\left\|w_{t+1}^c \right\|^2 $$=$$ \sum_{i=1}^n (1/z_i) s_{t+1}^{c,i} K(\overline{g}_{i,c},\overline{g}_{i,c})$. Therefore, it does not need to directly access the high-dimensional feature space.

{\renewcommand\baselinestretch{1}\selectfont
\renewcommand{\algorithmicrequire}{\textbf{Input:}}   
\renewcommand{\algorithmicensure}{\textbf{Output:}}
\renewcommand{\algorithmicrepeat}{\textbf{Repeat:}}   
\renewcommand{\algorithmicuntil}{\textbf{Until:}}
\renewcommand{\algorithmicreturn}{\textbf{Return:}}

\begin{algorithm}[tb]
    \caption{cfMGML}
    \label{alg:algorithm}
    \begin{algorithmic}[1] 

    \REQUIRE train data $\{\mathcal B_i,Y_i^+\}_{i=1}^n,\lambda$
    \ENSURE $\boldsymbol{s,\overline{g}}$

    \STATE Let $\boldsymbol{\kappa,\nu,\mu} \gets 0$ and randomly initialize $\boldsymbol{\overline{g}}$
    \STATE {\bfseries for $r=1:R$}
    \STATE \quad {\bfseries for $t=1...T$}
    \STATE \qquad {\bfseries for $i=1:n,p \in Y_i^+,q \in Y_i^-$}
    \STATE \quad \qquad {\bfseries if $F_{p}(\mathcal B_i) \leq 1$}
    \STATE \qquad \qquad $\kappa_{p,i} \gets \kappa_{p,i} + 1$
    \STATE \quad \qquad {\bfseries if $F_{q}(\mathcal B_i) \geq -1$}
    \STATE \qquad \qquad $\nu{q,i} \gets \nu_{q,i} + 1$
    \STATE \quad \qquad {\bfseries if $F_{q}(\mathcal B_i) \geq F_p(\mathcal B_i)-2$}
    \STATE \qquad \qquad $\mu_{p,i} \gets \mu_{p,i} + 1$
    \STATE \qquad \qquad $\mu_{q,i} \gets \mu_{q,i} - 1$
    \STATE \qquad $\boldsymbol{s} \gets \boldsymbol{\kappa}+\boldsymbol{\nu}+\boldsymbol{\mu}$
    \STATE \qquad $\boldsymbol{s} \gets \min\{1,\sqrt{\frac{2}{\lambda}} / \sum_{c=1}^C \left\|w_{t+1}^c \right\|^2\} \cdot \boldsymbol{s}$
    \STATE \quad {\bfseries for $i=1...n,c=1...C$}
    \STATE \qquad $\overline{g}_{i,c} \gets \arg\max_{g\in \mathcal B_i}\langle \boldsymbol{s}_{c,:},K(\overline{g}_{:,c},g) \rangle$
    \STATE \textbf{return} $\boldsymbol{s,\overline{g}}$

    \end{algorithmic}
\end{algorithm}
}

\noindent\textbf{Convergence Analysis.} Theoretically, the proposed iterative approach can converge by the following theorem.
\begin{theorem}
    Given the training set $\{(\mathcal B_i,Y_i^+): i$$=$$1...n\}$ and let $K$$=$$\max_{i,j \in \{1,...,n\}:g_i \in \mathcal B_i, g_j \in \mathcal B_j} K(g_i,g_j)$, then the sub-gradient descent step achieves a solution within $\sigma$ of optimal via $\mathcal{O}(\frac{K^2}{\lambda \sigma})$ iterations with $\mathcal{O}(\frac{n^2 K^2 R C}{\lambda \sigma})$ time complexity.
\end{theorem}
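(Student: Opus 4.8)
The plan is to treat the theorem as a convergence guarantee for the inner subgradient-descent loop, in which the representative graphs $\overline{g}_{i,c}$ are held fixed (as the algorithm does in its second phase). With the $\overline{g}$'s frozen, each $\langle w_c, \phi(\overline{g}_{i,c})\rangle$ becomes linear in $w_c$, so the objective in Eq.(\ref{cfmgml:unconstrained}) is a sum of a quadratic regularizer and convex hinge terms, hence convex, and in fact $\lambda$-strongly convex because the term $\frac{\lambda}{2}\sum_c\|w_c\|^2$ is $\lambda$-strongly convex while the loss part is convex. This reduces the statement to the classical analysis of subgradient descent with step size $\eta_t = 1/(\lambda t)$ on a $\lambda$-strongly convex function (the Pegasos-style bound), so the remaining work is to instantiate the two quantities that analysis needs: a bound $G$ on the subgradient norm, and the guarantee that the iterates and the optimum stay in a fixed ball.

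First I would bound $\|\nabla_t^c\|$. Writing $\nabla_t^c = \lambda w_t^c + u_t^c$, the data part $u_t^c$ is a weighted combination of the vectors $\phi(\overline{g}_{i,c})$ whose coefficients, after the normalizations $1/|Y_i^+|^2$, $1/|Y_i^-|^2$, $1/(|Y_i^+||Y_i^-|)$ and the indicator switches $\alpha,\beta\in\{-1,0,1\}$, collapse to an $\mathcal{O}(1)$ factor per bag, with an outer $1/n$. Using $\|\phi(g)\|^2 = K(g,g) \le K$ together with the projection in line 13, which confines $\sum_c\|w_t^c\|^2$ to a ball of radius $\mathcal{O}(\sqrt{2/\lambda})$, one obtains $\|\nabla_t^c\| \le G$ with $G^2 = \mathcal{O}(K^2)$. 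I would then verify that the minimizer $W^\ast$ also lies in this ball, since $\frac{\lambda}{2}\|W^\ast\|^2 \le f(W^\ast) \le f(0)$ and $f(0)$ is a bounded constant (all hinge terms reduce to constants at $W=0$); hence the projection never excludes the optimum and the standard analysis applies verbatim.

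With $G^2 = \mathcal{O}(K^2)$ and $\lambda$-strong convexity, the Pegasos bound gives $\frac{1}{T}\sum_{t}f(w_t) - f(W^\ast) = \mathcal{O}\!\big(\frac{G^2\ln T}{\lambda T}\big)$, so driving the suboptimality below $\sigma$ requires $T = \mathcal{O}\!\big(\frac{K^2}{\lambda\sigma}\big)$ iterations (absorbing the logarithmic factor), which is the first claim. For the time complexity I would count the per-iteration cost: each score $\langle w_{t+1}^c, \phi(\overline{g}_{i,c})\rangle$ is evaluated as an $n$-term kernel sum over all $n$ bags and $C$ classes, costing $\mathcal{O}(n^2 C)$ kernel evaluations; multiplying by $T$ iterations and the $R$ outer rounds yields $\mathcal{O}\!\big(\frac{n^2 K^2 R C}{\lambda\sigma}\big)$.

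I expect the main obstacle to be the subgradient-norm bound rather than the convergence machinery itself: one must check that the combinatorial weights and indicator functions in $\nabla_t^c$ really do collapse to an $\mathcal{O}(1)$ per-bag contribution, and that the projection step genuinely keeps both the iterates and the optimum inside a common ball, since the convex analysis only yields the stated rate once these are in place. The alternation over representative graphs (the outer $R$ loop) is not covered by the rate, as the theorem certifies only the inner convex solve, so I would be careful to phrase the guarantee as convergence of each inner phase, with $R$ entering merely as a multiplicative factor in the runtime.
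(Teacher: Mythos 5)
Your proposal is correct and follows essentially the same route as the paper: both reduce the claim to the Pegasos-style bound for subgradient descent with $\eta_t=1/(\lambda t)$ on the $\lambda$-strongly convex inner problem (representative graphs frozen), bound the squared subgradient norm by $\mathcal{O}(CK^2)$ using $\|\phi(g)\|^2\le K$ together with $\frac{\lambda}{2}\sum_c\|w_c^*\|^2\le\mathcal{L}(0)=1$ (equivalently the radius-$\sqrt{2/\lambda}$ ball enforced by the projection), absorb the $\ln T$ factor, and multiply the $\mathcal{O}(n^2C)$ per-iteration kernel-evaluation cost by $T$ and $R$. Your extra remarks --- checking that the optimum lies inside the projection ball and that the guarantee certifies only each inner convex solve, not the outer alternation over $\overline{g}$ --- are sound and, if anything, slightly more careful than the paper's own argument.
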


\begin{proof}
    The asymptotic convergence rate for the subgradient descent method with learning rate $\eta_t$$=$$1/(\lambda t)$ given in \cite{shalev2007pegasos} can be expressed as,
    \begin{equation*}
        \min_t \mathcal{L}(W_t) \leq \mathcal{L}(W^*) + \frac{M(1+\ln(T))}{2\lambda T}
    \end{equation*}
    where $M$ denotes the upper bound of $\sum_{c=1}^C \left\| \nabla_t^c \right\|^2$.
    From Eq.(\ref{cfmgml:unconstrained}), we have
    \begin{equation}
        \frac{\lambda}{2}\sum_{c=1}^C \left\| w_c^* \right\|^2 \leq \mathcal{L}(W^*) \leq \mathcal{L}(W=0) = 1
    \end{equation}
    Then we compute the upper bound $M$ as follows,
    \begin{align}
        \begin{aligned}
        \sum_{c=1}^C \left\| \nabla_t^c \right\|^2 & \leq 2(\sum_{c=1}^C \lambda^2 \left\| w_t^c \right\|^2 + CK^2)\\
        & \leq 4 \lambda + 2CK^2 = M
        \end{aligned}
    \end{align}
    Moreover, the value of $\ln(T)$ can be treated as constant in practical situations since $T$ is sufficiently small. Hence we have $T$$=$$\mathcal{O}(\frac{K^2}{\lambda \sigma})$. Then the time complexity of proposed algorithm is $\mathcal{O}(\frac{n^2 K^2 R C}{\lambda \sigma})$.
\end{proof}

Due to $K$, $R$, $C$ and $\lambda$ are constants in training process, the run time of our algorithm is mainly dependent on the number of training examples $n$ and the solution accuracy $\sigma$ at the subgradient descent step. Compared with Frank-Wolfe method whose time complexity is $\mathcal{O}(\frac{n^4 D^2}{C \sigma})$, where $D$ represents the regularization constant \cite{elisseeff2002kernel-fw-method}, it can be observed that cfMGML needs much less time complexity than Frank-Wolfe method to achieve a solution within $\sigma$ of optimal.

\section{Experiments}

\subsection{Baseline Algorithms}
We compare the performance for the graph and bag levels separately with corresponding baseline algorithms in each level. For the graph labeling, we compare cfMGML with below baselines: 1.) Hinged hamming-loss cfMGML (\textsc{Hlk}): \textsc{Hlk} differs from cfMGML by using only the hinged hamming loss instead of ranking loss, which imposes a penalty on a classification \cite{elisseeff2002ml-hamming}. 2.) MIMLfast: MIMLfast differs from cfMGML in that it is a multi-instance multi-label algorithm proposed in \cite{huang2018fast}. It can predict both the instance and bag labels, and is chosen to study the performance gain between instance-based and graph-based multi-label learning. 3.) \textsc{Sgsl svm}: \textsc{Sgsl svm} is trained with labeled graphs instead of bags and predicts graph label using \textsc{Libsvm} \cite{chang2011libsvm} with a graph kernel. The aim of comparing this algorithm is to provide an accuracy upper bound for cfMGML. 4.) \textsc{Dummy} classifier (\textsc{Dummy}): This assigns the most common label in the train dataset to each graph in test dataset. This algorithm aims to provide the performance lower bound for cfMGML.

For the bag labeling, the following methods are compared: MIMLfast, \textsc{M3Miml} \cite{zhang2008m3miml}, \textsc{Hlk}, MGMLent \cite{zhu2018multi}, \textsc{Sgsl svm} and \textsc{Dummy}. Note that MIMLfast and \textsc{M3Miml} use the bag-of-instances representation for data instead of bag-of-graphs representation.



\begin{table*}[tb]
    \caption{ Predictive performance (mean $\pm$ standard deviation) on Graph Label prediction of each comparing methods on the real world datasets.}
    \label{result2}
    \begin{scriptsize}
    \begin{center}
        \small
    \addvbuffer[-10pt -10pt]{
    \begin{tabular}{lccccccc}
    \toprule
    Algorithms & Msrcv2(attri) & Msrcv2(label) &Voc12(attri) & Voc12(label) & Dblp(ai) & Dblp(db) & Dblp(cv) \\
    \midrule
    cfMGML & \textbf{0.433$\pm$0.001}  & \textbf{0.382$\pm$0.018}  & \textbf{0.420$\pm$0.004}  & 0.256$\pm$0.002  & \textbf{0.576$\pm$0.005}  &\textbf{0.454$\pm$0.002} &\textbf{0.687$\pm$0.003}\\
    \textsc{Hlk} & 0.235$\pm$0.001  & 0.222$\pm$0.007  & 0.367$\pm$0.005  & \textbf{0.367$\pm$0.004}  & 0.462$\pm$0.003  &0.403$\pm$0.005 &0.644$\pm$0.006\\
    MIMLfast & 0.410$\pm$0.004  &-  & 0.382$\pm$0.008  &-  & 0.178$\pm$0.010  &0.168$\pm$0.004 &0.251$\pm$0.004\\
    \textsc{Sgsl svm} & 0.638$\pm$0.003  & 0.502$\pm$0.007  & 0.440$\pm$0.001  & 0.372$\pm$0.002  & 0.645$\pm$0.004  &0.557$\pm$0.002 &0.754$\pm$0.003\\
    \textsc{Dummy} & 0.145$\pm$0.003  & 0.145$\pm$0.003  & 0.361$\pm$0.006  & 0.361$\pm$0.006  & 0.265$\pm$0.003  &0.325$\pm$0.005 &0.354$\pm$0.004\\

    \bottomrule
    \end{tabular}
    }
    \end{center}
    \end{scriptsize}
\end{table*}

\subsection{Prediction Settings} To meet the actual situation, the graph-level classifiers can not use the bag labels unknown in the prediction phase. In other words, it has no label restrictions in the graph-level prediction. For bag-level prediction, we treat the labels of a bag as the union of predicted labels of the graphs in the bag.

\subsection{Datasets} Two different sets of datasets are used: the image and text datasets. For image datasets, we construct labeled datasets from three real image datasets: 1.) A subset of the Microsoft Research Cambridge v2 image dataset (Msrcv2) which consists of 23 classes and 591 images; 2.) PASCAL VISUAL Object Challenge 2012 Segmentation dataset (Voc12) which contains 20 classes and 1073 images; 3.) Scene dataset introduced in \cite{zhou2006:multi} which consists of 2000 natural scene images and 5 classes. The first two datasets provide a corresponding segmentation of each image into several objects and each object is assigned a label. But the last dataset provides only bag labels, which can be used for bag-level label prediction only. We cut the images into several continuous samples based on pixel-level labels and each individual sample can be converted to a graph by applying SLIC \cite{achanta2012slic}, a superpixel-based algorithm. The node of graph corresponds to each superpixel, which is described by 729-dimensional RGB-color histogram and 144-dimensional histogram of gradients (i.e. node-attributes graph) instead of the median of color histogram using in traditional \textsc{Mgml} setting (i.e. node-label graph). Because the median of color histogram is only a one-dimensional value, it can not grasp the information of local area distribution, resulting in the loss of local visual information. The new features (RGB-color histogram + HOG) can overcome this shortcoming. Each graph edge corresponds to the pair of adjacent superpixel. Hence, each image can be treated as a bag of graphs.

For text datasets, we use the DBLP dataset in three main fields: Artificial Intelligence (AI), Data Base (DB) and Computer Vision (CV). Each paper can be regarded as a union of abstracts of itself and its references which is assigned to multiple labels simultaneously. We extract several keywords and relation between these keywords from each abstract in paper by applying E-FCM algorithm \cite{perusich2006efcm}. Then each paper can be treated as a bag of graphs by converting each abstract to a graph with keywords as nodes and relation of them as edges. For AI dataset, we choose 1000 papers from top 13 frequent classes ranked in decreasing order to form the corresponding multi-graph dataset with a total of 7593 graphs. For DB dataset, there are 1050 papers and 9682 graphs totally. Moreover, we select 1300 papers from CV dataset with 10835 graphs.

\subsection{Parameter Tunning and Evaluation Measures} Since the prediction performance of the same algorithm in different datasets varies by using different parameter settings, in order to make a fair comparison, we adopt the best parameter setting for each algorithm for achieving the best performance. Specifically, we run 10-fold cross validation on each algorithm for searching optimal parameter over the set and record the average metric value over all folds $\pm$ the standard deviation in metric value. For cfMGML and \textsc{Hlk}, we search $\lambda$ over the set $\{10^{-1},...,10^{-8}\}$ and fix $R$$=$$10$, $T$$=$$100$ by observing that the subgradient descent algorithm converges in 100 iterations for most datasets. For \textsc{M3Miml}, we tune parameter $C$ over the set $\{10^{-2},...,10^{6}\}$. For the \textsc{Sgsl svm}, we use \textsc{Libsvm} with regularization parameter $C$ in $\{10^1,...,10^7\}$. For MIMLfast and MGMLent, parameters are determined in the same way as suggested by the corresponding papers. GH and WL kernels are used for node-attributes and node-label graphs respectively. \\
\indent For graph label prediction, we consider the correctly predicted label ratio as graph level accuracy. In bag level, we evaluate the performance with 6 measures commonly used in multi-label learning: one-error, hamming loss, coverage, ranking loss, average precision, macro-averaging F1.

\begin{table*}[!t]
    \centering
    \caption{Predictive performance of each algorithm (mean$\pm$std. deviation) for Bag Label prediction on the real world datasets. $\downarrow(\uparrow)$ means the smaller (larger) the value is, the better the performance is.}
    \label{Result3}
    \addvbuffer[0pt 0pt]{
    \resizebox{\textwidth}{63mm}{
    \addvbuffer[-5pt -15pt]{
    \begin{tabular}{lcccccccc}
        \hline
        \hline
        \multicolumn{1}{c}{\multirow{1}{*}{\begin{tabular}[c]{@{}c@{}}\textbf{Algorithms}\end{tabular}}} &cfMGML &cfMGML(G)   &\textsc{Hlk}     &\textsc{M3Miml}      & MIMLfast               &MGMLent               &\textsc{Sgsl svm}        &\textsc{Dummy} \\
        \hline
        \multicolumn{1}{l}{\multirow{1}{*}{\begin{tabular}[c]{@{}c@{}}\textbf{Datasets}\end{tabular}}} & \multicolumn{8}{c}{One-error$\downarrow$}\\ \cline{2-8}
        \hline
        Msrcv2        &\textbf{0.165$\pm$0.013} &0.281$\pm$0.016  &0.483$\pm$0.052 &0.289$\pm$0.012      &0.279$\pm$0.021         &\textbf{0.163$\pm$0.012}          &0.116$\pm$0.021  &0.641$\pm$0.026\\
        Scene        &\textbf{0.299$\pm$0.018} &0.312$\pm$0.020 &0.728$\pm$0.058 &0.525$\pm$0.016 	    &0.351$\pm$0.025  		 &0.334$\pm$0.023          &-  &-  \\
        Voc12         &\textbf{0.252$\pm$0.016} &0.325$\pm$0.016 &0.320$\pm$0.074 &0.447$\pm$0.013  	&0.258$\pm$0.027  		 &0.261$\pm$0.034          &0.255$\pm$0.023  &0.306$\pm$0.021\\
        Dblp(cv)               &\textbf{0.297$\pm$0.019} &0.351$\pm$0.025 &0.349$\pm$0.070 &0.661$\pm$0.007 		&0.542$\pm$0.025  		 &\textbf{0.297$\pm$0.007}          &0.223$\pm$0.024  &0.512$\pm$0.013  \\
        Dblp(ai)               &\textbf{0.370$\pm$0.020} &0.426$\pm$0.023 &0.497$\pm$0.051 &0.462$\pm$0.015      &0.536$\pm$0.020  		 &0.375$\pm$0.022          &0.302$\pm$0.030  &0.552$\pm$0.024   \\
        Dblp(db)               &\textbf{0.408$\pm$0.012} &0.439$\pm$0.029 &0.492$\pm$0.011 &0.454$\pm$0.023  	&0.545$\pm$0.014         &0.422$\pm$0.010          &0.360$\pm$0.021  &0.544$\pm$0.022 \\
        \hline
        \multicolumn{1}{l}{\multirow{1}{*}{\begin{tabular}[c]{@{}c@{}}\textbf{ }\end{tabular}}} & \multicolumn{8}{c}{Hamming loss$\downarrow$}\\ \cline{2-8}
        \hline
        Msrcv2        &\textbf{0.401$\pm$0.005} &0.511$\pm$0.015 &0.886$\pm$0.001    &0.882$\pm$0.001   &0.901$\pm$0.001        &0.452$\pm$0.003        &0.544$\pm$0.004        &1.320$\pm$0.003 \\
        Scene        &\textbf{0.565$\pm$0.005} &0.570$\pm$0.017 &0.755$\pm$0.008  	&0.891$\pm$0.002    &\textbf{0.566$\pm$0.003}        &0.581$\pm$0.006        &-        &-  \\
        Voc12         &\textbf{0.563$\pm$0.005} &0.693$\pm$0.008 &0.884$\pm$0.007    &0.658$\pm$0.004  	&0.903$\pm$0.002        &0.624$\pm$0.002 		&0.485$\pm$0.006  		&0.604$\pm$0.008  \\
        Dblp(cv)               &\textbf{0.500$\pm$0.004}  &0.574$\pm$0.003 &0.872$\pm$0.005    &0.873$\pm$0.001  	&0.689$\pm$0.002  		&0.519$\pm$0.004  		&0.355$\pm$0.003  		&0.592$\pm$0.006  \\
        Dblp(ai)               &\textbf{0.482$\pm$0.006} &0.581$\pm$0.006 &0.870$\pm$0.002  	&0.519$\pm$0.002	&0.603$\pm$0.001 		&0.517$\pm$0.002  		&0.371$\pm$0.001  	 	&0.581$\pm$0.003   \\
        Dblp(db)               &\textbf{0.645$\pm$0.002} &0.690$\pm$0.029 &0.853$\pm$0.005    &0.721$\pm$0.001 	&0.664$\pm$0.005	    &0.665$\pm$0.003  		&0.604$\pm$0.004  		&0.714$\pm$0.005   \\
        \hline
        \multicolumn{1}{l}{\multirow{1}{*}{\begin{tabular}[c]{@{}c@{}}\textbf{ }\end{tabular}}} & \multicolumn{8}{c}{Coverage$\downarrow$}\\ \cline{2-8}
        \hline
        Msrcv2        &\textbf{0.459$\pm$0.012} &0.498$\pm$0.012 &0.886$\pm$0.011    &0.915$\pm$0.012 	&0.468$\pm$0.011 	    &1.170$\pm$0.018        &0.365$\pm$0.014       &0.539$\pm$0.015  \\
        Scene        &\textbf{0.877$\pm$0.015} &1.031$\pm$0.006 &1.640$\pm$0.016  	&0.776$\pm$0.014    &0.844$\pm$0.007        &1.120$\pm$0.013        &-       &-   \\
        Voc12         &\textbf{0.698$\pm$0.019} &0.701$\pm$0.014 &0.804$\pm$0.012 	&0.811$\pm$0.013 	&\textbf{0.699$\pm$0.012}        &0.891$\pm$0.010  		&0.543$\pm$0.012       &0.702$\pm$0.013   \\
        Dblp(cv)               &\textbf{0.304$\pm$0.016} &0.409$\pm$0.007 &\textbf{0.305$\pm$0.007}    &0.935$\pm$0.015  	&0.452$\pm$0.018  		&0.590$\pm$0.019  		&0.217$\pm$0.017  	   &0.637$\pm$0.018   \\
        Dblp(ai)               &\textbf{0.309$\pm$0.005} &0.323$\pm$0.007 &0.327$\pm$0.012    &0.359$\pm$0.006   &0.324$\pm$0.014  		&0.409$\pm$0.007  		&0.223$\pm$0.007  	   &0.632$\pm$0.006   \\
        Dblp(db)               &\textbf{0.373$\pm$0.009} &0.375$\pm$0.012 &0.407$\pm$0.015    &0.432$\pm$0.009  	&0.386$\pm$0.010  		&0.612$\pm$0.014  		&0.335$\pm$0.009       &0.589$\pm$0.012 \\
        \hline
        \multicolumn{1}{l}{\multirow{1}{*}{\begin{tabular}[c]{@{}c@{}}\textbf{ }\end{tabular}}} & \multicolumn{8}{c}{Ranking loss$\downarrow$}\\ \cline{2-8}
        \hline
        Msrcv2        &\textbf{0.080$\pm$0.007} &0.108$\pm$0.005  &0.225$\pm$0.005    &0.261$\pm$0.002  	 &0.107$\pm$0.004        &0.503$\pm$0.004        &0.041$\pm$0.002         &0.230$\pm$0.005  \\
        Scene        &\textbf{0.147$\pm$0.005} &0.212$\pm$0.001 &0.353$\pm$0.003    &0.197$\pm$0.004    &0.189$\pm$0.007  		&0.452$\pm$0.004        &-  		 &-  \\
        Voc12         &\textbf{0.182$\pm$0.007}  &\textbf{0.183$\pm$0.010}  &0.208$\pm$0.005	&0.222$\pm$0.002     &\textbf{0.183$\pm$0.010}   	 &0.528$\pm$0.008        &0.121$\pm$0.003  		 &0.175$\pm$0.010  \\
        Dblp(cv)               &\textbf{0.124$\pm$0.008} &0.208$\pm$0.007 &\textbf{0.125$\pm$0.009}  	&0.273$\pm$0.001     &0.482$\pm$0.004 		&0.411$\pm$0.007  		&0.076$\pm$0.008  		 &0.403$\pm$0.006 \\
        Dblp(ai)               &\textbf{0.137$\pm$0.002} &0.171$\pm$0.004 &0.169$\pm$0.004    &0.201$\pm$0.003  	 &0.570$\pm$0.003 		&0.366$\pm$0.009  		&0.064$\pm$0.006  		 &0.424$\pm$0.005  \\
        Dblp(db)               &\textbf{0.175$\pm$0.001} &0.235$\pm$0.010 &0.201$\pm$0.002	&0.253$\pm$0.004  	 &0.572$\pm$0.002  		&0.513$\pm$0.001  		&0.147$\pm$0.002  		 &0.362$\pm$0.003  	\\
        \hline
        \multicolumn{1}{l}{\multirow{1}{*}{\begin{tabular}[c]{@{}c@{}}\textbf{ }\end{tabular}}} & \multicolumn{8}{c}{Average precision$\uparrow$}\\ \cline{2-8}
        \hline
        Msrcv2        &\textbf{0.730$\pm$0.015} &0.671$\pm$0.014  &0.501$\pm$0.015   &0.540$\pm$0.017       &0.693$\pm$0.008        &0.360$\pm$0.012          &0.860$\pm$0.007        &0.402$\pm$0.021   \\
        Scene        &\textbf{0.811$\pm$0.018} &0.754$\pm$0.015 &0.541$\pm$0.014   &0.527$\pm$0.008       &0.770$\pm$0.006	      &0.520$\pm$0.016 	        &-        &-  \\
        Voc12         &\textbf{0.612$\pm$0.014} &0.532$\pm$0.008 &0.578$\pm$0.018   &0.554$\pm$0.018       &0.605$\pm$0.013        &0.465$\pm$0.015          &0.632$\pm$0.020        &0.589$\pm$0.012  \\
        Dblp(cv)               &\textbf{0.719$\pm$0.006}  &0.655$\pm$0.007  &0.662$\pm$0.019   &0.478$\pm$0.016       &0.477$\pm$0.012        &0.487$\pm$0.019  	    &0.781$\pm$0.017        &0.531$\pm$0.016  \\
        Dblp(ai)               &\textbf{0.684$\pm$0.011}  &0.607$\pm$0.016 &0.610$\pm$0.005   &0.562$\pm$0.015       &0.429$\pm$0.015        &0.582$\pm$0.018          &0.787$\pm$0.014        &0.528$\pm$0.008   \\
        Dblp(db)               &\textbf{0.648$\pm$0.009} &0.612$\pm$0.014  &0.583$\pm$0.017   &0.556$\pm$0.010       &0.452$\pm$0.018        &0.471$\pm$0.013          &0.754$\pm$0.011        &0.514$\pm$0.009    \\
        \hline
        \multicolumn{1}{l}{\multirow{1}{*}{\begin{tabular}[c]{@{}c@{}}\textbf{ }\end{tabular}}} & \multicolumn{8}{c}{Macro-averaging F1$\uparrow$}\\ \cline{2-8}
        \hline
        Msrcv2        &\textbf{0.290$\pm$0.022} &0.271$\pm$0.016  &0.188$\pm$0.025  	  &0.186$\pm$0.021  &0.281$\pm$0.025        &0.228$\pm$0.014         &0.311$\pm$0.012          &0.110$\pm$0.010  \\
        Scene        &\textbf{0.339$\pm$0.019} &0.291$\pm$0.011 &0.301$\pm$0.014  	  &0.175$\pm$0.029 	&0.311$\pm$0.014  		&0.282$\pm$0.016  		 &-  		   &-  \\
        Voc12         &\textbf{0.236$\pm$0.007} &0.175$\pm$0.020 &0.182$\pm$0.013  	  &0.165$\pm$0.017  &\textbf{0.238$\pm$0.014}  		&0.198$\pm$0.010  		 &0.267$\pm$0.012  		   &0.172$\pm$0.026  \\
        Dblp(cv)               &\textbf{0.258$\pm$0.011} &0.202$\pm$0.014  &0.201$\pm$0.024  	  &0.146$\pm$0.020  &0.244$\pm$0.011  		&0.225$\pm$0.017  		 &0.301$\pm$0.018  	       &0.168$\pm$0.024	 \\
        Dblp(ai)               &\textbf{0.290$\pm$0.021} &0.263$\pm$0.009 &0.215$\pm$0.016	  &0.189$\pm$0.011 	&\textbf{0.291$\pm$0.017} 		&0.267$\pm$0.007  		 &0.340$\pm$0.011  		   &0.116$\pm$0.007  \\
        Dblp(db)               &\textbf{0.275$\pm$0.014} &0.244$\pm$0.018 &0.243$\pm$0.012     &0.201$\pm$0.024  &\textbf{0.278$\pm$0.008} 		&0.223$\pm$0.009 		 &0.335$\pm$0.022  		   &0.121$\pm$0.013  \\
        \hline
        \hline
    \end{tabular}
    }
    }
}
\end{table*}

\subsection{Results and Analysis}
\textbf{Performance Comparison on Graph Label Prediction.} 
In this experiment, we first study the performance difference between cfMGML and other baseline algorithms for graph-level prediction where corresponding bag labels are not available in the prediction process. Table \ref{result2} presents the accuracy results for this setting. As shown in the Table \ref{result2}, cfMGML is superior to MIMLfast $2$-$4$ percent on image datasets and $29$-$43$ percent on text datasets. This indicates that graph is better for capturing local structure than instance. Additionally, cfMGML outperforms \textsc{Hlk} which indicates that graph-level prediction can be improved by minimizing the labels ranking loss using representative graph information. Meanwhile, cfMGML outperforms other methods on all the datasets except Voc12(label). However, as mentioned above, this is not surprising, because the node-label graphs cannot adequately capture the image information. In addition, due to the \textsc{Sgsl svm} is allowed to utilize the actual label of each graph in a graph bag during training while cfMGML does not, there is no doubt that \textsc{Sgsl svm} will be better than cfMGML. To some extent, \textsc{Sgsl svm} is chosen as a ground truth case. We introduce it in the experiments only to show how far our cfMGML is from its practical upper bound on the performance. Table \ref{result2} has indicated that cfMGML has less difference with \textsc{Sgsl svm} than other algorithms.\\
\textbf{Performance Comparison on Different Types of Graph Representations.} Furthermore, for image classification, we also study the performance difference while using the node-attributes graphs (i.e., Msrcv2(attri), Voc12(attri)) and node-label graphs (i.e., Msrcv2 (label), Voc12(label)) to represent the image. From Table \ref{result2}, we observe that the accuracy of each algorithm using the node-attributes graphs is higher than using the node-label graphs. For example, the difference in accuracy of cfMGML is $5$-$16$ percent for node attributes / node label graphs datasets. As expected, this validates that the combination of HOG and color histogram features is a more effective representation that captures the color, shape and texture information of image. \\
\textbf{Performance Comparison on Bag Label Prediction.} After that, we study the bag label prediction of the proposed algorithms. Table \ref{Result3} provides the experimental result of each compared method on six different datasets. As shown in Table \ref{Result3}, it is evident that cfMGML is superior to other algorithms on most datasets in term of each evaluation measure. Specifically, cfMGML significantly outperforms MIMLfast and \textsc{M3miml}. This result suggests that multi-graph representation is better than multi-instance representation to describe the data and thus provide more profitable information for label learning. Further, we find that MGMLent performs not very well compared to our method in terms of average precision on all the datasets. This verifies that degenerating graphs to binary feature vectors may result in the loss of some useful structural information. Additionally, in bag label prediction, cfMGML obviously performs better than \textsc{Hlk} on image datasets relative to in graph label prediction. \\
\textbf{Performance Comparison on Different Formats of Bag Representations.} In this section, we conduct experiments to compare the performance difference between multi-graph representation and single-graph representation. As ever mentioned, the former captures both local and global information simultaneously while the latter exploits only global information. Since graph-level prediction is not applicable in single-graph representation, we compare two formats of datasets in the bag-level prediction. One is the multi-graph data as used in all the above experiments, and the other is single graph representation, where the number of graphs in each bag is reduced to one. For single graph represented image data, we do not cut the image into multiple samples, but directly use the SLIC algorithm to cut the image and represente it as one graph. For single graph represented text data, we combine all abstracts in the paper and use E-FCM algorithm to convert it to a graph. Both of these data are running in cfMGML on bag label prediction. As shown in Table \ref{Result3}, we observe that the performance of cfMGML using multi-graph representation (i.e. cfMGML) significantly outperforms the one using single-graph representation (i.e. cfMGML(G)). For example, the average precision of cfMGML increases up to $7$ percent in Dblp(ai) compared to the cfMGML(G). This is not surprising as the multiple graphs in the bag can properly describe the local inherent structures of the objects and the correlations between these objects, while single-graph representation ignores.


\section{Related Work}
\textbf{Multi-instance Learning.} Multi-instance learning (\textsc{Mil}) was introduced in the work of \cite{dietterich1997misl} to handle complex examples that can not be represented as a single feature vector. It was subsequently applied to problems in various areas \cite{xu2019isolation-Multi-Instance,yuguoxian-mi-7,yuguoxian-miml-2,yuguoxian-miml-3}. For effectively exploring data relationship and making full use of unlabeled images to further improve performance, graph-based learning on \textsc{Mil} has attracted more and more attention in recent years \cite{2012-graph-mi-2,2008-graph-mi-3,2006-graph-mi-4,2007-graph-mi-5}. In most of the existing graph-based multi-instance methods, the graph is constructed in sample level. Zhou et al. establish the graphs for each bag by using a distance measure. The nodes in graphs represent instances while the edges show the connections between pairs of instances and the relationship between instances can be discovered \cite{zhou2009multi-instance-non-iid}.  \\
\textbf{Multi-graph Learning.} Multi-graph learning (\textsc{Mgl}) is an generalization of Multi-Instance learning, which learn a classifier from training bags of graphs instead of instances \cite{wu2014boosting,wu2014multi-graph,pang2018semi,pang2017parallel}. Wu et al. \cite{wu2014:bag} proposed the gMGFL algorithm converting the graphs in multi-graph bags into instances with a set of feature subgraphs which can be obtained by applying a graph mining algorithm. As a result, the \textsc{Mgsl} learning is degenerated to a multi-instance learning problem which can be addressed by traditional multi-instance  classification method. Later, many works extended the standard \textsc{Mgsl} framework following this line of thought \cite{wu2016:positive}. For the same reason multi-graph multi-label learning aims to extend \textsc{Miml} to the graph field. Zhu et al. \cite{zhu2018multi} proposed a \textsc{Miml} algorithm based on entropy to solve \textsc{Mgml} through mining the informative subgraph using entropy. 

\section{Conclusion}
In this paper, we investigated a new multi-graph multi-label learning task, which aims to perform labeling at both the graph and the bag levels. This problem is significantly more challenging than traditional multi-instance learning and multi-graph learning with labels only available on the bag level. To tackle the problem, we presented a supervised learning framework, cfMGML enabling both the graph and bag levels label prediction. Meanwhile, cfMGML distinguishes itself by developing learning model directly over the graphs instead of degenerating it to multi-instance learning problem as what prior multi-graph learning approaches always do. cfMGML employs graph-kernel based scoring functions for label selection in graph and bag levels. Moreover, we discussed how the proposed thresholding rank-loss objective function and subgradient descent based graphs can be used to tackle the cumulative error issue and non-convex problem. We proved that the run time of our approach is mainly dependent on the number of training examples and the solution accuracy at the subgradient descent step. Finally, we introduced new image and text datasets for the study and the experimental results indicated that cfMGML clearly outperforms the state-of-the-art algorithms.  

\bibliography{cfMGML}

\end{document}